\documentclass{article} 
\usepackage{iclr2024_conference,times}

\usepackage{hyperref}
\usepackage{url}

\usepackage[utf8]{inputenc} 
\usepackage[T1]{fontenc}    
\usepackage{hyperref}       
\usepackage{url}            
\usepackage{booktabs}       
\usepackage{amsfonts}       
\usepackage{nicefrac}       
\usepackage{microtype}      
\usepackage{xcolor}         

\usepackage{graphicx}
\usepackage{subcaption}
\usepackage{svg}

\usepackage{mathrsfs}

\usepackage{hyperref}
\usepackage{url}

\usepackage{amsmath}
\usepackage{amssymb}
\usepackage{booktabs}

\usepackage{amsthm}
\usepackage{bbold}
\usepackage{mathtools}
\usepackage{enumerate}
\usepackage{pgfplots}

\usepackage{nicefrac}
\usepackage{multirow}

\usepackage{mathrsfs}
\usepackage{wrapfig,booktabs}

\theoremstyle{plain}
\newtheorem{theorem}{Theorem}[section]
\newtheorem{proposition}[theorem]{Proposition}

\theoremstyle{definition}

\theoremstyle{remark}

\newcommand{\R}{\mathbb{R}}

\newcommand{\HH}{\mathcal{H}}

\newcommand{\UU}{\mathcal{U}}

\DeclareMathOperator*{\argmax}{arg\,max}

\DeclarePairedDelimiter\floor{\lfloor}{\rfloor}

\title{Kernelised Normalising Flows}


\author{
    Eshant English\thanks{Equal contribution}$^{\ \ 1}$ \quad
    Matthias Kirchler$^*$$^{1,2}$ \quad
    Christoph Lippert$^{1,3}$
    \\
    \texttt{eshant.english@hpi.de}
    \\
    $^1$Hasso Plattner Institute for Digital Engineering, Germany
    \\
    $^2$University of Kaiserslautern-Landau, Germany    
    \\
    $^3$Hasso Plattner Institute for Digital Health at the Icahn School of Medicine at Mount Sinai, NYC, USA
}

%

\iclrfinalcopy 
\begin{document}

\maketitle

\begin{abstract}
Normalising Flows are non-parametric statistical models characterised by their dual capabilities of density estimation and generation. This duality requires an inherently invertible architecture. However, the requirement of invertibility imposes constraints on their expressiveness, necessitating a large number of parameters and innovative architectural designs to achieve good results. Whilst flow-based models predominantly rely on neural-network-based transformations for expressive designs, alternative transformation methods have received limited attention. In this work, we present Ferumal flow, a novel kernelised normalising flow paradigm that integrates kernels into the framework. Our results demonstrate that a kernelised flow can yield competitive or superior results compared to neural network-based flows whilst maintaining parameter efficiency.
Kernelised flows excel especially in the low-data regime, enabling flexible non-parametric density estimation in applications with sparse data availability.
\end{abstract}

\section{Introduction}

Maximum likelihood is a fundamental approach to parameter estimation in the field of machine learning and statistics. However, its direct application to deep generative modelling is rare due to the intractability of the likelihood function. Popular probabilistic generative models such as Diffusion Models \citep{Dai2020SlicedIN} and Variational Autoencoders \citep{kingma2022autoencoding} instead resort to optimising the Evidence Lower Bound (ELBO), a lower bound on the log-likelihood, due to the challenges in evaluating likelihoods.

The change of variables theorem offers a neat and elegant solution to compute the exact likelihood for deep generative modelling. These models, known as normalising flows, employ invertible architectures to transform complex probability distributions into simpler ones. Normalising flows \citep{papamakarios2021normalizing,Kobyzev_2021} excel in efficient density estimation and exact sampling, making them suitable for various applications.

Whilst flow-based models possess appealing properties rooted in invertibility, they also impose limitations on modelling choices, which can restrict their expressiveness. This limitation can be mitigated by employing deeper models with a higher number of parameters. For instance, the Glow model \cite{kingma2018glow} utilised approximately 45 million parameters for image generation on CIFAR-10 \cite{Krizhevsky09learningmultiple}, whereas StyleGAN3 \cite{karras2019stylebased}, a method that doesn't use likelihood optimisation, achieved a superior FID score with only about a million parameters. 

The issue of over-parameterisation in flow-based models hinders their effectiveness in domains with limited data, such as medical applications.
For example, normalising flows can be used to model complex phenotypic or genotypic data in genetic association studies~\citep{hansen2022normalizing,kirchler2022training}; collection of high-quality data in these settings is expensive, with many studies only including data on a few hundred to a few thousand instances.
In scenarios with low data availability, a flow-based network can easily memorise the entire dataset, leading to an unsatisfactory performance on the test set. While existing research has focused on enhancing the expressiveness of flows through clever architectural techniques, the challenge of achieving parameter efficiency has mostly been overlooked with few exceptions \cite{lee2020nanoflow}.
Most normalising flows developed to date rely on neural networks to transform complex distributions into simpler distributions.
However, there is no inherent requirement for flows to use neural networks.
Due to their over-parameterisation and low inductive bias, neural networks tend to struggle with generalisation in the low-data regime, making them inapplicable to many real-world applications.

In this work, we propose a novel approach to flow-based distribution modelling by replacing neural networks with kernels.
Kernel machines work well in low-data regimes and retain expressiveness even at scale.
We introduce Ferumal flows, a kernelised normalising flow paradigm that outperforms or achieves competitive performance in density estimation for tabular data compared to other efficiently invertible neural network baselines like RealNVP and Glow.
Ferumal flows require up to 93\% fewer parameters than their neural network-based counterparts whilst still matching or outperforming them in terms of likelihood estimation.   
We also investigate efficient training strategies for larger-scale datasets and show that kernelising the flows works especially well on small datasets.

\section{Background}
\label{sec:background}

\subsection{Maximum likelihood optimisation with normalising flows}
A normalising flow is an invertible neural network, $f: \R^D \to \R^D$ that maps real data $x$ onto noise variables $z$.
The noise variable $z$ is commonly modelled by a simple distribution with explicitly known density, such as a normal or uniform distribution, while the distribution of $x$ is unknown and needs to be estimated.
As normalising flows are maximum likelihood estimators, for a given data set of instances $x_1, \ldots, x_n$, we want to maximise the log-likelihood over model parameters,
\begin{align*}
    \max_f \sum_{i=1}^n \log\left(p_X(x_i)\right).
\end{align*}
With the change of variables formula,
\begin{align*}
    p_X(x) = p_Z\left(f(x)\right) \left\vert  \det J_f(x) \right\vert,
\end{align*}
where $J_f(x)$ is the Jacobian of $f$ in $x$, we can perform this optimisation directly:
\begin{align*}
        \max_f \sum_{i=1}^n \log\left(p_Z\left(f\left(x_i\right)\right)\right) + \log\left(\left|\det J_f\left(x_i\right)\right|\right).
\end{align*}
 The first part, $\log(p_Z(f(x_i)))$, can be computed in closed form due to the choice of $p_Z$.
 The second part,  $\log(|\det J_f(x_i)|)$, can only be computed efficiently if $f$ is designed appropriately.

\subsection{Coupling layers}
One standard way to design such an invertible neural network $f$ with tractable Jacobian is affine coupling layers \citep{dinh2017density}.
A coupling layer $C_\ell: \R^D \to \R^D$ is an invertible layer that maps an input $y_{\ell-1}$ to an output $y_\ell$ ($\ell$ is the layer index):
first, permute data dimensions with a fixed permutation $\pi_\ell$, and split the output into the first $d$ and second $D-d$ dimensions: 
\begin{align*}
    \left[u_\ell^1,\,\, u_\ell^2\right] = \left[\pi_\ell(y_{\ell-1})_{1:d}, \,\, \pi_\ell(y_{\ell-1})_{d+1:D}\right].
\end{align*}
Commonly, the permutation is either a reversal of dimensions or picked uniformly at random.
Next, we apply two neural networks, $s_\ell, t_\ell: \R^d \to \R^{D-d}$, to the first output, $u_\ell^1$, and use it to scale and translate the second part, $u_\ell^2$:
\begin{align*}
    y_{\ell}^2 = \exp\left(s_\ell\left(u_\ell^1\right)\right) \odot u_\ell^2 + t_\ell\left(u_\ell^1\right).
\end{align*}
The first part of $y_{\ell}$ remains unchanged, i.e., $y_{\ell}^1 = u_\ell^1$, and we get the output of the coupling layer as:
\begin{align*}
    y_\ell = C_\ell\left(y_{\ell-1}\right) = \left[y_{\ell}^1, \,\, y_{\ell}^2\right] = \left[u_\ell^1, \, \, \exp\left(s_\ell\left(u_\ell^1\right)\right) \odot u_\ell^2 + t_\ell\left(u_\ell^1\right)\right].
\end{align*}

The Jacobian matrix of this transformation is a permutation of a lower triangular matrix resulting from $u_\ell^1$ undergoing an identity transformation and $u_\ell^2$ getting transformed elementwise by a function of $u_\ell^1$.
The Jacobian of the permutation has a determinant with an absolute value of 1 by default.
The diagonal of the remaining Jacobian consists of $d$ elements equal to unity and the other $D-d$ elements equal the scaling vector $s_\ell\left(u_\ell^1\right)$.
Thus, the determinant can be efficiently computed as the product of the elements of the scaling vector $s_\ell\left(u_\ell^1\right)$. 

The coupling layers are also efficiently invertible as only some of the dimensions are transformed and the unchanged dimensions can be used to obtain the scaling and translation factors used for the forward transformation to reverse the operation.

Multiple coupling layers can be linked in a chain of $L$ layers such that all dimensions can be transformed:
\begin{align*}
    f(x) = C_L \circ \dots \circ C_1(x), 
\end{align*}
i.e., $y_0 = x$ and $y_L = f(x)$.

\subsection{Kernel machines}
Kernel machines \citep{scholkopf2002learning} implicitly map a data vector $u \in \R^p$ into a high-dimensional (potentially infinite-dimensional) reproducing kernel Hilbert space (RKHS), $\HH$, by means of a fixed feature map $\phi: \R^p \to \HH$.
The RKHS $\HH$ has an associated positive definite kernel $k(u, v) = \left\langle \phi(u), \phi(v) \right\rangle_\HH$, where $\langle \cdot, \cdot \rangle_\HH : \HH \times \HH \to \R$ is the inner product of $\HH$.
The kernel $k$ can oftentimes be computed in closed form without requiring the explicit mapping of $u, v$ into $\HH$, making computation of many otherwise intractable problems feasible.
In particular, as many linear learning algorithms, such as Ridge Regression or Support Vector Machines, only require explicit computations of norms and inner products, these algorithms can be efficiently kernelised.
Instead of solving the original learning problem in $\R^p$, kernel machines map data into the RKHS and solve the problem with a linear algorithm in the RKHS, offering both computational efficiency (due to linearity and kernelisation) and expressivity (due to the nonlinearity of the feature map and the high dimensionality of the RKHS).

\section{Ferumal flows: kernelisation of flow-based architectures}

In this section, we extend standard coupling layers to use kernel-based scaling and translation functions instead.
Whilst neural networks are known to perform well in large-data regimes or when transfer-learning from larger datasets can be applied, kernel machines perform well even at small sample sizes and naturally trade-off model complexity against dataset size without losing expressivity.

\subsection{Kernelised coupling layers}

We keep the definition of coupling layers in Section~\ref{sec:background}, and only replace the functions $s_\ell$ and $t_\ell$ by functions mapping to and from an RKHS $\HH$.
We have to deal with two main differences to the kernelisation of many other learning algorithms:
firstly, the explicit likelihood optimisation does not include a regularisation term that penalises the norm of the prediction function.
And secondly, instead of a single mapping from origin space to RKHS to single-dimensional output, we aim to combine multiple layers, in each of which the scaling and translation map from origin space to RKHS to multi-dimensional output.
As a result, the optimisation problem will not be convex in contrast to standard kernel learning, and we have to derive a kernelised and tractable representation of the learning objective.

In particular, in layer $\ell$, we introduce RKHS elements $V_{\ell}^s, V_{\ell}^t \in \HH^{D-d}$ and define scaling and translation as
\begin{align*}
    s_\ell\left(u_\ell^1\right) = \left[\left\langle V_{\ell,j}^s, \phi\left(u_\ell^1\right) \right\rangle_\HH\right]_{j=1}^{D-d} \in \R^{D-d} \quad\quad \mbox{ and } \quad\quad t_\ell\left(u_\ell^1\right) = \left[\left\langle V_{\ell,j}^t, \phi\left(u_\ell^1\right) \right\rangle_\HH\right]_{j=1}^{D-d} \in \R^{D-d}.
\end{align*}
We summarise $V_\ell = \left[V_{\ell}^s, V_{\ell}^t\right]$ and $V = \left[V_1, \ldots, V_L\right]$ for the full flow $f(x) = C_L \circ \dots \circ C_1 (x)$.
Since elements $V \in \HH^{2L(D-d)}$ and $\HH$'s dimensionality is potentially infinite, we cannot directly optimise the objective:
\begin{align}
    \label{kernel-obj}
    \max_{V} & \sum_{i=1}^n p_Z\left(C_L \circ \ldots \circ C_1 (x_i)\right) + \log\left(\left|\det J_{C_L \circ \ldots \circ C_1}(x_i)\right|\right) = L(V).
\end{align}
However, we can state a version of the representer theorem \citep{scholkopf2001generalized} that allows us to kernelise the objective:
\begin{proposition}
\label{prop:representer}
    Given the objective $L$ in equation~\eqref{kernel-obj}, for any $V'=\left[V_{1}', \ldots, V_L'\right]  \in \HH^{L(D-d)}$ there also exists a $V$ with $L(V) =L\left(V'\right)$ such that
    \begin{align*}
        V_{\ell} = \sum_{i=1}^n k\left(\cdot, u_{\ell,i}^1\right)  A_{\ell, i}
    \end{align*}
    for some $A_{\ell,i} = \left[A_{\ell, i}^s, A_{\ell, i}^t\right] \in \R^{2(D-d)}$.
    Here, $u_{\ell,i}^1 = \pi_\ell(C_{\ell-1} \circ \dots \circ C_{1}(x_i))_{1:d}$, i.e., the first part of the permutated input to layer $\ell$ for data point $i$.
    In particular, if there exists a solution $V'\in \argmax_V L(V)$, then there's also a solution $V^*$ of the form 
    \begin{align*}
        V_{\ell}^* = \sum_{i=1}^n k\left(\cdot, u_{\ell,i}^1\right)  A_{\ell, i}.
    \end{align*}
\end{proposition}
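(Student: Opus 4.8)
The plan is to prove a sequential (layer-by-layer) version of the representer theorem, exploiting that the objective $L$ depends on the RKHS elements only through finitely many inner-product evaluations, combined with the feed-forward structure of the flow. The starting observation is that, by the chain rule, $\left|\det J_{C_L \circ \dots \circ C_1}(x_i)\right| = \prod_{\ell=1}^L \left|\det J_{C_\ell}(y_{\ell-1,i})\right|$, and each factor equals $\exp\!\big(\textstyle\sum_{j} s_\ell(u_{\ell,i}^1)_j\big)$; likewise each intermediate value $y_{\ell,i} = C_\ell(y_{\ell-1,i})$ is a fixed function of $u_{\ell,i}^1$ and of the two vectors $s_\ell(u_{\ell,i}^1), t_\ell(u_{\ell,i}^1) \in \R^{D-d}$. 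Hence, once the points $u_{\ell,i}^1$ are fixed, $L(V)$ depends on $V$ only through the finitely many numbers $\langle (V_\ell)_m, \phi(u_{\ell,i}^1)\rangle_\HH$ for $\ell \le L$, $i \le n$, and coordinates $m \le 2(D-d)$. The one complication, which rules out a one-shot application of the classical representer theorem, is that for $\ell \ge 2$ the evaluation points $u_{\ell,i}^1$ themselves depend on $V_1, \ldots, V_{\ell-1}$, so the design points are not known a priori.

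I would resolve this circularity by induction on the layer index $\ell$. Assume the first $\ell-1$ layers have already been replaced by kernel-expanded elements $V_1, \ldots, V_{\ell-1}$ such that the outputs $y_{1,i}, \ldots, y_{\ell-1,i}$ coincide, for every $i$, with those produced by $V_1', \ldots, V_{\ell-1}'$; then $u_{\ell,i}^1 = \pi_\ell(C_{\ell-1}\circ\dots\circ C_1(x_i))_{1:d}$ is unchanged and is now frozen. Let $\mathcal{S}_\ell := \mathrm{span}\{k(\cdot, u_{\ell,i}^1) : i=1,\ldots,n\} \subseteq \HH$, and let $V_\ell$ be obtained from $V_\ell'$ by replacing each of its $2(D-d)$ coordinates with its orthogonal projection onto $\mathcal{S}_\ell$. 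Since every coordinate of $V_\ell$ then lies in $\mathcal{S}_\ell$, we may write $V_\ell = \sum_{i=1}^n k(\cdot, u_{\ell,i}^1) A_{\ell,i}$ for suitable $A_{\ell,i} = [A_{\ell,i}^s, A_{\ell,i}^t] \in \R^{2(D-d)}$, which is exactly the asserted form. Because $k(\cdot, u_{\ell,i}^1) \in \mathcal{S}_\ell$, the reproducing property gives $\langle (V_\ell)_m, \phi(u_{\ell,i}^1)\rangle_\HH = \langle (V_\ell)_m, k(\cdot, u_{\ell,i}^1)\rangle_\HH = \langle (V_\ell')_m, k(\cdot, u_{\ell,i}^1)\rangle_\HH = \langle (V_\ell')_m, \phi(u_{\ell,i}^1)\rangle_\HH$ for every coordinate $m$ and every $i$, i.e., $s_\ell$ and $t_\ell$ take identical values at all the $u_{\ell,i}^1$. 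Therefore $y_{\ell,i} = C_\ell(y_{\ell-1,i})$ and the factor $\left|\det J_{C_\ell}(y_{\ell-1,i})\right|$ are both unchanged for every $i$; this both advances the induction hypothesis to layer $\ell$ and shows that the layer-$\ell$ contributions to $L$ are untouched.

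Starting the induction at $\ell = 1$ (where $u_{1,i}^1 = \pi_1(x_i)_{1:d}$ is fixed from the outset) and running it through $\ell = L$, we obtain $V = [V_1, \ldots, V_L]$ of the claimed form for which $f$ produces the same $y_{L,i} = f(x_i)$ and the same per-sample Jacobian determinants as $V'$; hence $L(V) = L(V')$. For the final assertion, apply this to any $V' \in \argmax_V L(V)$: the constructed $V^*$ satisfies $L(V^*) = L(V') = \max_V L(V)$, so $V^* \in \argmax_V L(V)$ and has the stated kernel representation. I expect the main obstacle to be handling this circular dependence cleanly — making rigorous that the design points of layer $\ell$ are frozen by the projections already carried out on layers $1, \ldots, \ell-1$, so that the ordinary finite-dimensional projection argument applies one layer at a time. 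The absence of a norm-regularisation term is not an obstacle: we claim only that some maximiser has this form, and the discarded orthogonal remainders affect neither the objective nor the constraint, so they may simply be dropped.
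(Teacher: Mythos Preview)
Your proposal is correct and follows essentially the same route as the paper: orthogonally project each coordinate of $V_\ell'$ onto $\mathrm{span}\{\phi(u_{\ell,i}^1):i=1,\ldots,n\}$ and use the reproducing property to show that every inner product $\langle V_{\ell,j}', \phi(u_{\ell,i}^1)\rangle_\HH$ appearing in the objective is preserved. Your layer-by-layer induction makes explicit a point the paper's proof leaves implicit---that for $\ell\ge 2$ the design points $u_{\ell,i}^1$ depend on $V_1,\ldots,V_{\ell-1}$ and are only fixed once the earlier layers have been projected---but otherwise the argument is the same.
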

\begin{proof}
    Let $V' \in \HH^{2L(D-d)}$ and $\Phi_\ell = \text{span}\{\phi(u_{\ell,1}^1), \ldots, \phi(u_{\ell, n}^1) \}$ be the space spanned by the feature maps of layer $\ell$ inputs, and let $\Phi_\ell^\bot$ denote its orthogonal complement in $\HH$.
    We can then represent each element ${V_{\ell,j}^s}'\in \HH$ ($j \in \{1, \ldots, D-d\}$) as an orthogonal sum of an element of $\Phi_\ell$ and $\Phi_\ell^\bot$,
    \begin{align*}
        {V_{\ell,j}^s}' = \phi_{\ell,j} + \phi_{\ell,j}^\bot, \mbox{ with } \phi_{\ell, j} = \sum_{i=1}^n A_{\ell,i,j}^s \phi(u_{\ell,i}^1) \mbox{ and } \langle \phi_{\ell,j}^\bot, \phi(u_{\ell,i}^1) \rangle_\HH = 0 \,\, \forall i=1, \ldots, n
    \end{align*}
    for some values $A_{\ell, i, j}^s \in \R$.
    In the objective~\eqref{kernel-obj}, we only use ${V_{\ell,j}^s}'$ to compute $\langle {V_{\ell,j}^s}', \phi(u_{\ell,i}^1) \rangle_\HH$ as part of the computation of $s_\ell(u_{\ell,i}^1)$.
    But due to orthogonality, it holds that
    \begin{align*}
        \langle {V_{\ell,j}^s}', \phi(u_{\ell,i}^1) \rangle_\HH = \langle \phi_{\ell,j} + \phi_{\ell,j}^\bot, \phi(u_{\ell,i}^1) \rangle_\HH =\langle \phi_{\ell,j}, \phi(u_{\ell,i}^1) \rangle_\HH + \langle  \phi_{\ell,j}^\bot, \phi(u_{\ell,i}^1) \rangle_\HH = \langle \phi_{\ell,j}, \phi(u_{\ell,i}^1) \rangle_\HH.
    \end{align*}
    Hence, replacing ${V_{\ell,j}^s}'$ by $\phi_{\ell,j} = \sum_{i=1}^n A_{l,i,j}^s \phi(u_{\ell, i}^1)$ keeps the objective unchanged.
    The reproducing property of the RKHS $\HH$ now states that $\langle \phi(u_{\ell,i}^1), \phi(\cdot) \rangle_\HH = k(u_{\ell, i}^1, \cdot)$.

    Repeating this for all $\ell = 1, \ldots, L$, all $j = 1, \ldots, D-d$ and also for translations $t_\ell$ yields a version of $V'$ that can be represented as a linear combination of the stated form.
\end{proof}
In contrast to the classical representer theorem, the objective doesn't contain a regulariser of the model's norm, which would ensure that \emph{any} solution can necessarily be represented as a linear combination of kernel evaluations.
However, if a solution exists, Proposition~\ref{prop:representer} ensures that there also exists a solution that can be expressed as a linear combination of kernel evaluations.
Therefore, we can re-insert this solution $V^*$ into the objective~\ref{kernel-obj} to get a kernelised objective.

For layer $\ell$ and arbitrary $a \in \R^d$,
\begin{align*}
s_\ell(a) = \left[\langle V_{\ell, j}^{s*}, \phi(a)\right\rangle]_{j=1}^{D-d} = \left[\sum_{i=1}^n A_{\ell,i,j}^s k\left(u_{\ell,i}^1, a\right) \right]_{j=1}^{D-d} = \sum_{i=1}^n  k\left(u_{\ell,i}^1, a\right) A_{\ell, i}^s.
\end{align*}
As in the objective~\eqref{kernel-obj}, $s_\ell$ gets only evaluated in points $a \in \left\{ u_{\ell,i}^1 | i = 1, \ldots, n\right\}$, this simplifies to
\begin{align*}
    s_\ell\left(u_{\ell,m}^1\right) = \sum_{i=1}^n k\left(u_{\ell,i}^1, u_{\ell, m}^1\right) A_{\ell,i}^s =  A_{\ell}^s K\left(U_\ell^1, U_\ell^1\right)_m,
\end{align*}
where $K\left(U_\ell^1, U_\ell^1\right) = \left[k\left(u_{\ell, i}^1, u_{\ell, m}\right)^1\right]_{i,m=1}^n$ is the kernel matrix at layer $\ell$ and $A_\ell^s = \left[A_{\ell, 1}^s, \ldots, A_{\ell, n}^s\right] \in \R^{(D-d) \times n}$ is the weight matrix.
A similar derivation holds for $t_\ell$.

In total, we can kernelise the objective~\eqref{kernel-obj} and optimise over parameters $A \in \R^{L \times n \times 2(D-d)}$ instead of over $V \in \HH^{2L(D-d)}$. 
In contrast to neural network-based learning, the number of parameters, $2Ln(D-d)$, is fixed except for the number of layers (since $d$ is usually set to $\floor{D/2}$), but increases linearly with the dataset size, $n$.
This makes kernelised flows especially promising for learning in the low-data regime, as their model complexity naturally scales with dataset size and does not over-parametrise as much as neural networks (as long as one does not employ an excessive number of layers).

Since the resulting objective function is not convex, optimisers targeted to standard kernel machines such as Sequential Minimal Optimisation~\citep{platt1998sequential} are not applicable.
Instead, we optimise~\eqref{kernel-obj} with variations of stochastic gradient descent~\citep{kingma2014adam}.

\subsection{Efficient learning with auxiliary points}
The basic kernelised formulation can be very computationally expensive even at moderate dataset sizes and can tend towards overfitting in the lower-data regime.
In Gaussian Process (GP) regression, this problem is usually addressed via sparse GPs and the introduction of \emph{inducing variables}~\citep{JMLR:v6:quinonero-candela05a}.
In a similar spirit, we introduce \emph{auxiliary points}.
In layer $\ell$, instead of computing the kernel with respect to the full data $u_{\ell,1}^1, \ldots, u_{\ell,n}^1$, we take $N \ll n$ new variables $W_\ell^1 = [w_{\ell, 1}^1, \ldots, w_{\ell,N}^1] \in \R^{d \times N}$ and compute the scaling transform as
\begin{align*}
    \hat{s}_{\ell}(u_{\ell, m}^1) = \hat{A}_\ell^sK(U_\ell^1, W_\ell^1)_m
\end{align*}
with $\hat{A}_\ell^s \in \R^{(D-d)\times N}$ (analogously for $\hat{t}_\ell$).
We make these auxiliary points learnable and initialise them to a randomly selected subset of $u_{\ell, 1}^1, \ldots, u_{\ell, n}^1$.
This procedure reduces the learnable parameters from $2n(D-d)L$ (for both $s_\ell$ and $t_\ell$) to $2NDL$.

In another variation we make these auxiliary points shared between layers.
In particular, instead of selecting $L$ times $N$ points $w_{\ell,1}^1, \ldots, w_{\ell,N}^1$, we instead only select $W^1 = [w_1^1, \ldots, w_N^1] \in \R^{d \times N}$ once and compute at layer $\ell$
\begin{align*}
        \bar{s}_{\ell}(u_{\ell, m}^1) = \hat{A}_\ell^sK(U_\ell^1, W^1)_m.
\end{align*}
This further reduces the learnable parameters to $2N(D-d)L + 2dN$.

\section{Related works}

We are unaware of any prior work that attempts to replace neural networks with kernels in flow-based architectures directly.
However, there is a family of flow models based on Iterative Gaussianisation (IG) \cite{chen2000gaussianization} that utilise kernels. Notable works using Iterative Gaussianisation include Gaussianisation Flows \cite{meng2020gaussianization}, Rotation-Based Iterative Gaussianisation (RBIG) \cite{Laparra_2011}, and Sliced Iterative Normalising Flows \cite{DBLP:journals/corr/abs-2007-00674}.
These IG-based methods differ significantly from our methodology. They rely on kernel density estimation and inversion of the cumulative distribution function for each dimension individually and incorporate the dependence between input dimensions through a rotation matrix, which aims to reduce interdependence. In contrast, our method integrates kernels into coupling layer-based architectures. Furthermore, IG-based methods typically involve a large number of layers, resulting in inefficiency during training and a comparable number of parameters to neural network-based flow architectures. In contrast, the Ferumal flow approach of incorporating kernels can act as a drop-in replacement in many standard flow-based architectures, ensuring parameter efficiency without compromising effectiveness.
Another generative model using kernels is the work on kernel transport operators \citep{huang2021forward}. demonstrated promising results in low-data scenarios and favourable empirical outcomes. However, their approach differs from ours as they employed kernel mean embeddings and transfer operators, along with a pre-trained autoencoder.

Other works focusing on kernel machines in a deep learning context are deep Gaussian processes \cite{damianou2013deep} and deep kernel learning \citep{wilson2016deep,wenliang2019learning}.
Deep GPs concatenate multiple layers of kernelised GP operations; however, they are Bayesian, non-invertible models for prediction tasks instead of density estimation and involve high computational complexity due to operations that require inverting a kernel matrix. Some works \cite{rudi2021psd,marteauferey2021sampling,tsuchida2023squared} use kernels for nonnegative functions for modelling densities but are also not invertible.
Deep kernel learning, on the other hand, designs new kernels that are parametrised by multilayer perceptrons.

\citet{maroñas2021transforming} integrated normalising flows within Gaussian processes. Their approach differs significantly from ours as they aimed to exploit the invertibility property of flows by applying them to the prior or the likelihood. Their combined models consist of kernels in the form of GPs but also involve neural networks in the normalising flows network, resembling more of a hybrid model.

NanoFlow \cite{lee2020nanoflow} also targets parameter efficiency in normalising flows.
They rely on parameter sharing across different layers, whereas we utilise kernels. We also attempted to implement the naive parameter-sharing technique suggested by \citet{lee2020nanoflow}, but we found no improvement in performance. 

\section{Experiments}

We assess the performance of our Ferumal flow kernelisation both on synthetic 2D toy datasets and on five real-world benchmark datasets sourced from \citet{Dua:2019}. The benchmark datasets include Power, Gas, Hepmass, MiniBoone, and BSDS300. To ensure consistency, we adhere to the preprocessing procedure outlined by \citet{papamakarios2018masked}.

\paragraph{Implementation details}
We kernelised the RealNVP and Glow architectures. 
For comparison purposes, RealNVP and Glow act as direct comparisons being the neural-net counterparts of our models. We have also included basic autoregressive methods, Masked Autoregressive Flows \citep{papamakarios2018masked}, and Masked Autoregressive Distribution Estimation \citep{germain2015made}, FFJORD \citep{grathwohl2018ffjord} as a continuous normalising flow, and Gaussianisation Flows \citep{meng2020gaussianization}, Rotation-based Iterative Gaussianisation \citep{Laparra_2011}, Sliced Iterative Normalising Flows \citep{Dai2020SlicedIN} as iterative gaussianisation methods for our evaluations. Most autoregressive flow models outperform non-autoregressive flow models. However, they usually come with the trade-off of either inefficient sampling or inefficient density estimation, i.e., either the forward or the inverse computation is computationally very expensive.

\begin{wrapfigure}{r}{0.53\linewidth}
    \centering
    \includegraphics[width=1.0\linewidth]{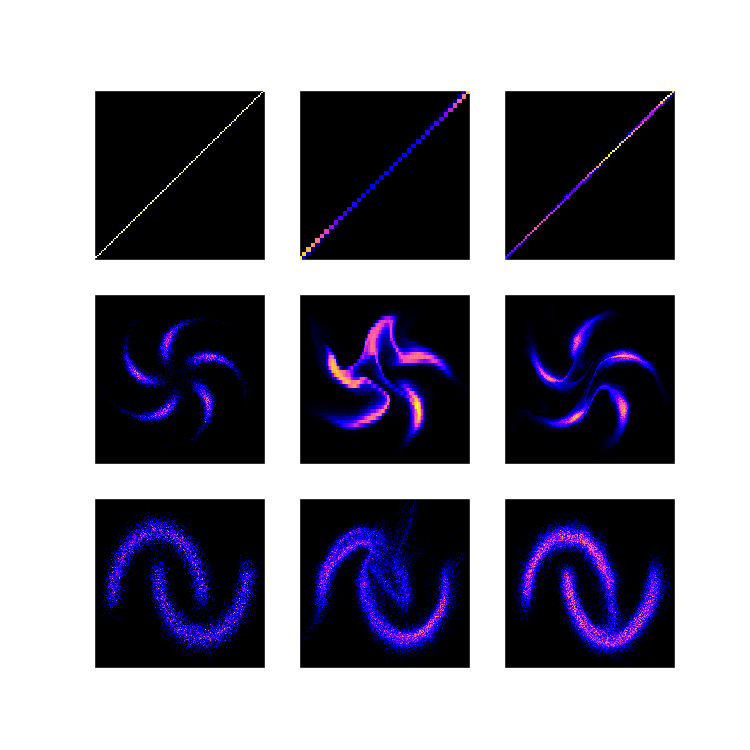}
    \caption{Histogram of 2D toy datasets. \textbf{Left:} True distribution. \textbf{Middle:} NN-based. \textbf{Right:} FF-kernelisation}
    \label{modelling}
\end{wrapfigure}

\paragraph{Training details} Our Ferumal Flow kernelisation has a negligible number of hyperparameters. Apart from learning rate hyperparameters (i.e., learning rate, $\beta_1$, $\beta_2$ for Adam) and the number of layers, that are central to both kernelised and neural-net-based flows, we only need to choose a kernel with its corresponding hyperparameters (and a number of auxiliary points for large-scale experiments). This is in contrast with neural-net-based flows where choices for a flow layer include a number of hidden sub-layers, the number of nodes in each sub-layer, residual connections, type of normalisation, activation function, dropout, and many more. Coupled with longer convergence times this necessitates considerably more time and resources in hyperparameter tuning than our proposed kernel methods. In our study, we utilised either the Squared Exponential kernel or Matern Kernels exclusively for all experiments. We learnt all the kernel hyperparameters using the GPyTorch library for Python for the main experiments. Throughout the experiments, we used the Adam \citep{kingma2014adam} optimiser, whilst adjusting the $\beta_1$ and $\beta_2$ parameters of the optimiser. Additionally, we decayed the learning rate either with predefined steps (StepLR) or with cosine annealing.
In all the experiments, we incorporated auxiliary points as we observed that they provided better results. In most cases, we persisted with 150 auxiliary points.

We coded our method in PyTorch \citep{paszke2019pytorch} and used existing implementations for the other algorithms.
We ran all experiments for Ferumal flows and other baselines on CPUs (Intel Xeon 3.7 GHz).
For more comprehensive training details, please refer to the Table \ref{train details}

\subsection{2D toy datasets}


\begin{wraptable}{r}{0.55\linewidth}
  \caption{Results on toy datasets. Log Likelihood in nats, higher is better}
  \label{toy datasets}
  \centering
  \begin{tabular}{lll}
    \toprule
    Dataset     & Ours (\#params) & NN-based (\#params) \\ 
    \midrule
    Line   & \textbf{3.75} (\textbf{5K}) & 3.15  (44K)\\
    Pinwheel   & \textbf{-2.44 } (\textbf{4K}) & -2.48 (44K)\\
    Moons     &\textbf{-2.43 } (\textbf{5K}) & -2.54 (44K)\\
    \bottomrule
  \end{tabular}
\end{wraptable}
Initially, we conducted density estimation experiments on three synthetic datasets that were sampled from two-dimensional distributions exhibiting diverse shapes and numbers of modes.
Figure~\ref{modelling} showcases the original data distribution alongside the samples generated using the Ferumal flow kernelisation and the corresponding neural network counterpart. The neural-net-based architecture clearly shows the distortion of density in many regions whereas the kernelised counterpart has much better modelling. Table~\ref{toy datasets} shows the corresponding log-likelihood in nats, quantitatively showing the enhancement from our kernelisation.
The results demonstrate that Ferumal flow kernelisation can outperform its neural net counterpart on these toy datasets.
All the toy datasets were trained with a batch size of 200 and for 10K iterations.
We also investigated the effect of our kernelisation on highly discontinuous densities strengthening our argument for kernelisation. Please refer to Appendix~\ref{sec:modelling discontinued densities}

\subsection{Real-world datasets}
We conducted density estimation experiments on five real-world tabular benchmark datasets (description can be found in Appendix~\ref{sec:dataset detail}), employing the preprocessing method proposed by \citet{papamakarios2018masked}. In our experiments, we kernelised two flow architectures, i.e., RealNVP and Glow, that utilise the coupling layer for efficient sampling and inference and making direct comparisons with them. Additionally, we also considered comparisons with GF (Gaussianisation Flows) \citep{meng2020gaussianization}, RBIG (Rotation-based Iterative Gaussianisation) \citep{Laparra_2011}, GIS (Gaussianied Iterative Slicing/Sliced Iterative Normalising Flows) \citep{Dai2020SlicedIN}, MAF (Masked Autoregressive Flows) \citep{papamakarios2018masked}, and MADE (Masked Autoregressive Distribution Estimation) \citep{germain2015made}, FFJORD \citep{grathwohl2018ffjord}, architectures that do not use coupling layers.
These methods are not directly comparable to the coupling-layer-based methods, as they have significantly higher computational costs.
In particular, Gaussianisation-based methods, require many more layers (up to 100, in our settings), whilst autoregressive flows are slow to sample from, due to their autoregressive nature. In contrast to Gaussianisation-based methods, our kernelisation of coupling layers does not increase the computational complexity and the training time under a fixed number of epochs is similar to neural-net-based coupling layers. Run-time comparisons under a fixed number of epochs are provided in Table~\ref{run time} in Appendix~\ref{sec:run time}),.

Table~\ref{tab:nats} presents the results of our experiments, revealing that Ferumal flow kernelisation consistently achieves better or competitive outcomes across all five datasets.
Despite its straightforward coupling layer architecture, our approach surpasses RBIG, GIS, and MADE on all the datasets and achieves competitive performance to the much more expensive MAF, GF, and FFJORD methods, underscoring the efficacy of integrating kernels. Please refer to Table~\ref{tab:error bars} for error bars on coupling and non-coupling experiments.

\begin{table}[t]
  \caption{Log-likelihood measured in nats. Larger values are better. Methods prepended with FF are our kernelised versions and the results with * are taken from existing literature.}
  \label{tab:nats}
  \centering
  \begin{tabular}{lllllll}
    \toprule
    \multicolumn{7}{c}{Datasets}                   \\
    \cmidrule(r){3-7}
    & Method     & Power     & Gas & Hepmass & Miniboone & BSDS300 \\
    \midrule
   Coupling & RealNVP & 0.17  & 8.33 & -18.7 & -13.55 & 153.28   \\
   & FF-RealNVP {\small(ours)} & {0.24} & {9.55 } & {-18.20} & {-11.19}  &  {154.30}   \\
   &  Glow     & 0.17  & 8.15 & -18.92 & -11.35 & {155.07}  \\
   & FF-Glow {\small(ours)} & {0.35} & {10.75}  & {-17.11} & {-10.76 }& 154.71\\
    \midrule
    
   Non-coupling & MADE* & -3.08 & 3.56 & -20.98 & -15.59 & 148.85    \\
   & MAF     & 0.24  & 10.08& -17.70 & -11.75 & 155.69   \\
   & FFJORD     & 0.46 & 8.59& -14.92 & -10.43 & 157.40    \\
   \midrule
   Gaussianisation & GF     & 0.57  & 10.08 & -17.59 & -10.32 & 152.82    \\
   methods & GIS*     & 0.32  & 10.30 & -19.00 & -14.26 & 157.31    \\
   &  RBIG     & -1.02  & -0.05 & -24.59 & -25.41 & 115.96    \\
    \bottomrule
  \end{tabular}
\end{table}

\subsection{Initial performance}
Figure~\ref{fig:nats} presents the learning curves of the train and test loss for our Ferumal flow kernelisation and the two neural-network counterparts.
These findings demonstrate that the Ferumal-flow-based architecture exhibits faster convergence compared to the neural network baselines.
This may be due to the parameter efficiency provided by our kernelisation or due to the stronger inductive biases provided by kernel machines. Throughout our experiments, we maintained default settings and ensured consistent batch sizes across all models.

\subsection{Low-data regime}
\label{sec:low data}
In certain applications, such as medical settings, data availability is often limited. Neural-network-based flows typically suffer from over-parameterisation, leading to challenges in generalisation within low-data regimes. To assess the generalisation capability of our model under such conditions, we trained our model using only 500 examples and evaluated its performance on the same benchmark datasets.
To address the challenges of limited data, we opted to tie the learned auxiliary variables across layers in this setting. This approach helped mitigate parameter complexity whilst maintaining the benefits of utilising auxiliary points.
\begin{wraptable}{r}{0.59\linewidth}
  \caption{Results on a small subset of 500 examples. LL in nats, higher the better}
  \label{low data regimes}
 \centering 
  \begin{tabular}{lll}
    \toprule
    Dataset     & Ours (\#params)  & FFJORD (\#params) \\ 
    \midrule
    Miniboone   & \textbf{-27.75} (\textbf{58K}) & -39.92  (821K)\\
    Hepmass     & \textbf{-27.90 }(\textbf{41K}) &-28.17  (197K)\\
    Gas     &\textbf{0.22 } (\textbf{11K}) & -7.50  (279K)\\
    Power     & \textbf{-2.91} (\textbf{8K})& -11.33 (43K)\\
    BSDS300     &  \textbf{121.22} (\textbf{85K}) &100.32 (3,127K) \\
    \bottomrule
  \end{tabular}
\end{wraptable}
As highlighted by \citet{meng2020gaussianization}, Glow and RealNVP struggled to generalise in low-data regimes, evidenced by increasing validation and test losses whilst the training losses decreased.
To provide a stronger benchmark, we included the FFJORD model \citep{grathwohl2018ffjord}. FFJORD is a continuous normalising flow method with a full-form Jacobian and exhibits superior performance to Glow or RealNVP in density estimation and generation tasks. For our model, we used a kernelised version of RealNVP which is notably weaker than the Glow version. This also proves that kernelisation can make flow-based models more data-efficient.

Table~\ref{low data regimes} presents the results, demonstrating that our method achieves superior generalisation. This may be attributed to the significantly lower number of parameters required compared to the continuous FFJORD method.


\subsection{Parameter efficiency}
Table~\ref{Parameter count} shows the parameter counts of Ferumal flows against the baseline methods.
Kernelising the models results in a parameter reduction of up to 93\%. The parameter efficiency consequently results in less data requirement, better generalisation, and faster convergence.
This reduction can be further improved by implementing strategies such as sharing auxiliary variables between layers or potentially with low-rank approximations, particularly in scenarios where data is limited and concerns about overfitting arise (see Appendix~\ref{sec:low rank}
 for additional details).

\begin{table}[t]
 \caption{Number of parameters. Methods prepended with FF are our kernelised versions with \% reduction in brackets}
  \label{Parameter count}
   \centering
  \begin{tabular}{lllll}
    \toprule
    \multicolumn{4}{c}{Architectures}                   \\
    \cmidrule(r){2-5}
    Dataset     & RealNVP  &FF-RealNVP {\small(ours)}   & Glow  & FF-Glow {\small(ours)} \\
    \midrule
Miniboone & 377K  & 117K (69\%) & 395K  &   141K (64\%)      \\
    Hepmass   & 288K & 76K (74\%)  & 293K &  79K  (73\%)   \\
    Gas         & 236K & 22K (91\%)& 237K  &   23K (90\%)  \\
    Power    & 228K &16K (93\%)  & 228K   &    20K (91\%)  \\
    BSDS300   & 458K & 171K (63\%) & 497K  &  279K (44\%)  \\
    \bottomrule
  \end{tabular}
\end{table}
\section{Discussion and Limitations}

We have introduced Ferumal flows, a novel approach to integrate kernels into flow-based generative models.
Our study highlighted that Ferumal flows exhibit faster convergence rates, thanks to the inductive biases imparted by data-dependent initialisation and parameter efficiency. Moreover, we have demonstrated that kernels can significantly reduce the parameter count without compromising the expressive power of the density estimators.
Especially in the low-data regime, our method shows superior generalisation capabilities, while Glow and RealNVP fail entirely, and FFJORD lags significantly in performance. We also demonstrate the application of our method in hybrid modelling. (Please refer to Appendix~\ref{sec:vae improvement})




In contrast to neural-network-based flows, kernelised flows require a different hyperparameter selection.
In classical kernel machines, the choice of kernel usually implies a type of inductive bias (e.g., for specific data types \citep{vishwanathan2010graph}).
Consequently, in this work, we mostly focus on Squared Exponential kernels and Matern kernels, but incorporating kernels with strong inductive biases may be a promising avenue for future research.
In particular, parameter sharing for highly structured modalities such as images is another potential direction for future research.


The present work introduces kernels only for some affine coupling layer architectures such as RealNVP and Glow.
However, the concepts also directly apply to other coupling-layer-type networks, such as neural spline flows \citep{durkan2019neural}, ButterflyFlows \citep{pmlr-v162-meng22a}, or invertible attention \citep{sukthanker2022generative} for greater expressiveness and parameter efficiency.
Ferumal flow kernelisation can also be directly enhanced with other building blocks such as MixLogCDF-coupling layers \citep{ho2019flow++}.

Whilst our method can be applied to coupling-type flow-based architectures, it poses challenges when it comes to ResFlow-like architectures \cite{behrmann2019invertible,chen2000gaussianization}, which require explicit control of Lipschitz properties of the residual blocks. As a result, extending our approach to ResFlow-like architectures is left as a direction for future research.

One major drawback of existing normalising flow algorithms is their dependence on an abundance of training data.
The introduction of kernels into these models may allow the application of flows in low-data settings.  
Additionally, in the era of increasingly large and complex models, energy consumption has become a significant concern. Faster convergence can contribute to energy savings. Notably, our models, owing to their faster convergence and few hyperparameters needed fewer training runs than the neural-network counterparts. We anticipate that future research will continue to explore efficient methodologies and strive for reduced energy and data demands.

\subsubsection*{Acknowledgments}
We extend our gratitude to Noel Danz for his valuable discussions on coding. We would also like to express our appreciation to Arkadiusz Kwasigroch and Alexander Rakowski for their initial feedback on the draft. This research was funded by the HPI research school on Data Science and by the European Commission in the Horizon 2020 project INTERVENE (Grant agreement ID: 101016775)

\bibliography{iclr2024_conference}
\bibliographystyle{iclr2024_conference}

\newpage

\appendix

\section{Additional experimental details for our method}
\label{sec:training details}

We employed the Adam optimiser exclusively for all our experiments. The other hyperparameters are chosen using a random grid search, i.e. $lr \in [0.01,0.005,0.001]$, $\beta_1, \beta_2 \in [0.85,0.9,0.95,0.99]$, kernel $\in [matern32, matern52, rbf]$. We used Cosine decay for all experiments with the minimum learning rate equalling zero. In initial experiments, we found that 150 auxiliary points performed satisfactorily and we persisted with it for the majority of datasets and tried 200 for datasets with high-dimensionality. For comprehensive information, please refer to Table~\ref{train details}.

For the experiments presented in Table~\ref{toy datasets}, we employed Glow-based architectures for both approaches. We trained these datasets for 1000 training steps, with the data synthesised at every training step (as done in existing implementations).
For experiments in Table~\ref{toy datasets}, Table~\ref{low data regimes}, we exclusively use the RBF/Squared-Exponential kernel and randomly sampled the kernel length scale from a log-uniform distribution, i.e., $\gamma \sim \exp(U)$, where $U \sim \UU_{[-2, 2]}$.

\begin{table}[h!]
  \caption{Model Architectures and hyperparameters for our method. }
  \label{train details}
  \centering
  \begin{tabular}{lllllll}
    \toprule
    \multicolumn{7}{c}{Datasets}                   \\
    \cmidrule(r){3-7}
    & Method     & Power     & Gas & Hepmass & Miniboone & BSDS300 \\
    \midrule
    & Dimensionality    & 6     & 8 & 21 & 43 & 63 \\
    & Training Points      & 1,615,917     & 852,174 & 315,123 & 29,556 & 1,000,000 \\
    \midrule
    & layers    & 14 & 12 & 15 & 12  &  12\\
    & kernel     & matern52 &matern32 & matern32  & matern52  &  rbf   \\
   &  auxiliary points &150  & 150 & 150 & 150 & 200  \\
   & learning rate(lr) & 0.005 & 0.005  & 0.01 & 0.005  & 0.005   \\
    &  $\beta_1$   & 0.95  & 0.9 & 0.99 & 0.99 &  0.95  \\
  &   $\beta_2$     & 0.9  & 0.99& 0.99 & 0.90 &  0.85   \\
   & lr schedular & cosine &cosine & cosine & cosine &  cosine   \\
   & min lr & 0 & 0.95& 0.95 & 0.95 &   0  \\
   & epochs     & 200  & 200& 500 & 600 &   400  \\
   & batch size     & 2000 &2000 & 1024  & 2000  &  1024   \\
    \bottomrule
  \end{tabular}
\end{table}

\section{Modelling Discontinuous Densities}
\label{sec:modelling discontinued densities}
\begin{wraptable}{r}{0.59\linewidth}
  \caption{Results on discontinuous densities. Log Likelihood in nats, higher is better}
  \label{discontinued densities}
  \centering
  \begin{tabular}{lll}
    \toprule
    Dataset     & Ours (\#params) & NN-based (\#params) \\ 
    \midrule
    Checkerboard   & \textbf{-3.61} (\textbf{1.8K}) & -3.68  (18.4K)\\
    Diamond   & \textbf{-3.24 } (\textbf{1.8K}) & -3.39 (18.4K)\\
    \bottomrule
  \end{tabular}
\end{wraptable}
We consider two toy datasets with discontinuous densities, CheckerBoard and Diamond. We used 4 flow steps for all the models and piecewise polynomial kernel and learnt its hyperparameters. 
Figure~\ref{discontinued densities} showcases the original data samples(Left) alongside the modelled density(Middle) and the samples generated using the flow(Right). The neural-net-based architecture for the checkerboard dataset in the first row shows blurry boundaries and ill-defined corners. The kernelised counterpart in the second row has better-defined boundaries in some cases. However, this effect is even more pronounced in the diamond dataset with the kernelised counterpart in the fourth row modelling the discontinuity way better. Table~\ref{discontinued densities} shows the corresponding log-likelihood in nats, quantitatively showing the enhancement from our kernelisation.
The results demonstrate that Ferumal flow kernelisation can outperform its neural net counterpart on these toy datasets.
All the toy datasets were trained with a batch size of 512 and for 100K iterations. We found that the piecewise polynomial kernel was better suited for discontinuous densities than Matern kernels. This provides another evidence of better performance due to inductive biases of a kernel. 
\begin{figure}     
  
    \centering
    \text{1. Neural-Net-based flow for the Checkerboard dataset}
    \includegraphics[width=1.0\linewidth]{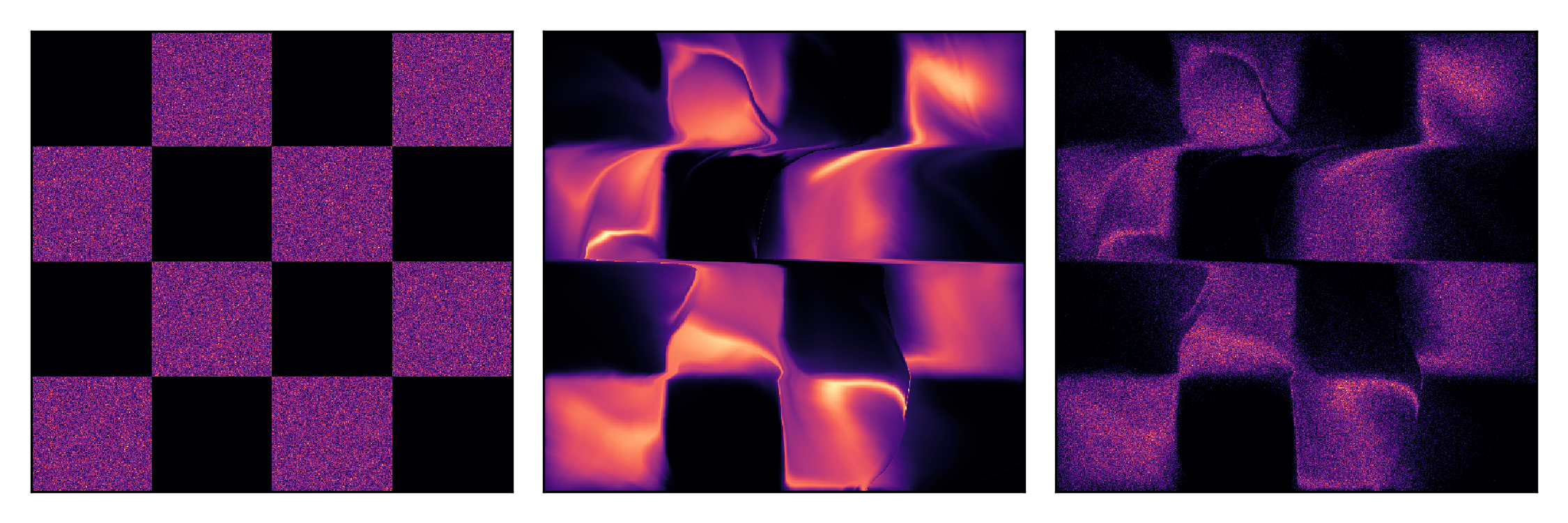}
    \text{2. Our kernelised flow for the Checkerboard dataset}
    \includegraphics[width=1.0\linewidth]{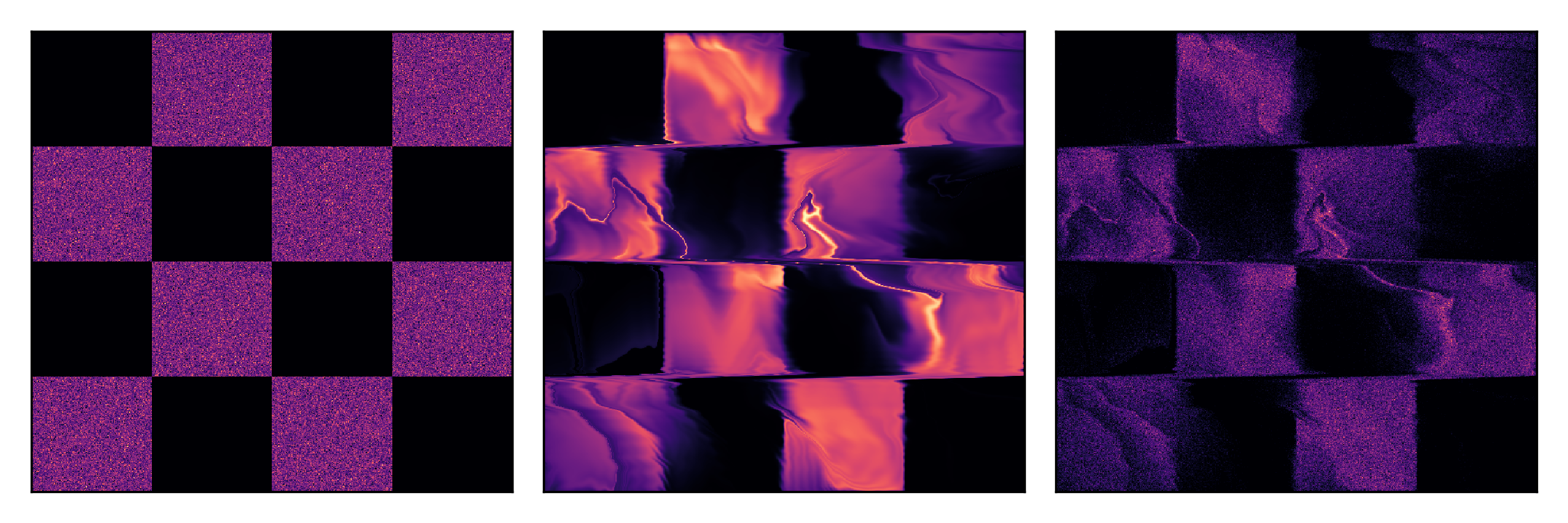}
    \text{3. Neural-Net-based flow for the Diamond dataset}
    \includegraphics[width=1.0\linewidth]{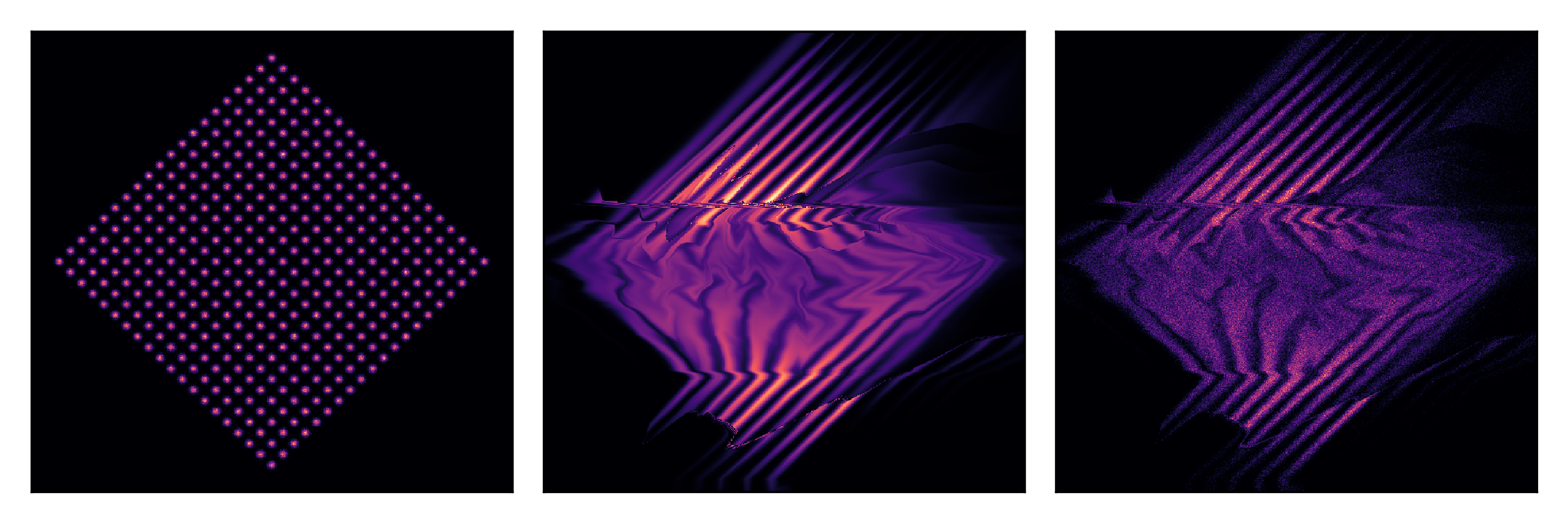} 
    \text{4. Our Kernelised flow for the Diamond dataset}
    \includegraphics[width=1.0\linewidth]{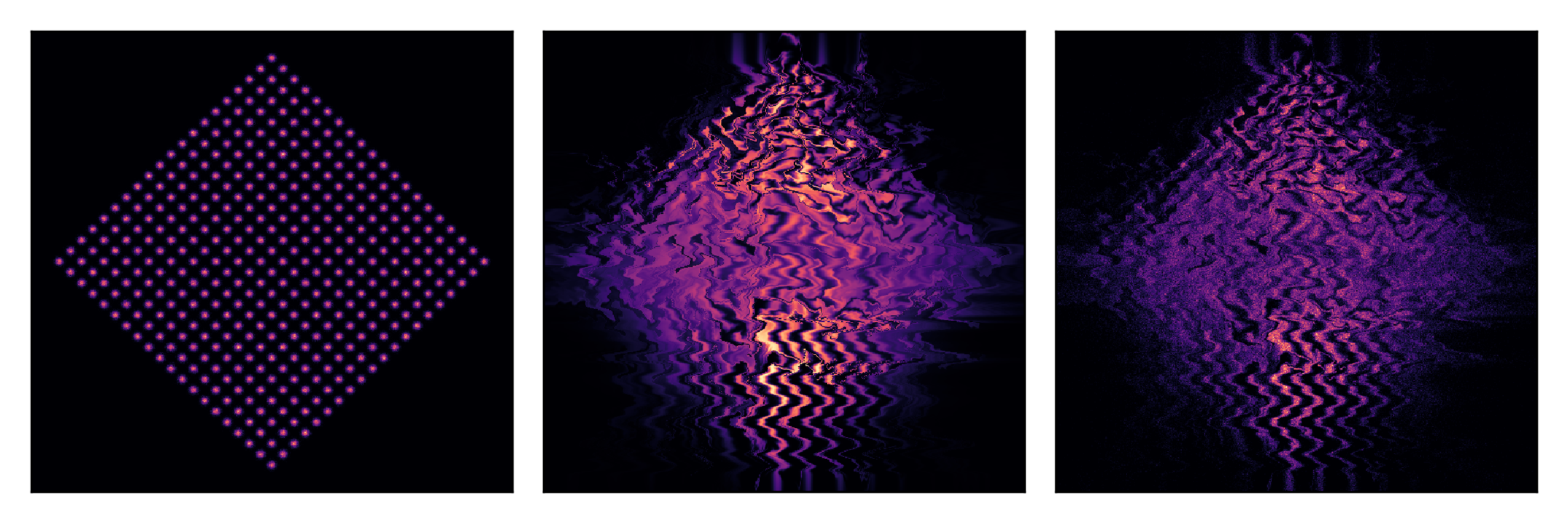}
    \caption{Discontinuous distributions. Shown are training data (left column),  flow density (center column), and and histogram of flow samples (right column)}
    \label{discontinued densities}
\end{figure}

\section{Low-rank approximations}
\label{sec:low rank}

For datasets characterised by high dimensionality and complex structures, relying solely on auxiliary points for the weight matrix ($A \in \R^{2(D-d) \times N}$) is inefficient. When half of the dimensions are transformed (as is the case in any coupling layer), this matrix becomes excessively large. 

To preserve the desirable quality of generalisation in our models, we propose an alternative approach to obtaining the weight matrix for a kernelised layer. We suggest using the product of two smaller matrices (with fewer parameters) instead. For a weight matrix $A \in \R^{p\times N}$, responsible for producing $p$ affine parameters using $N$ auxiliary points, we can learn two smaller matrices: $A^1 \in \R^{c \times N}$ and $A^2 \in \R^{c \times p}$ where $c < p$. By employing the outer product $\hat{A} = A^{2\top} A^1 \approx A$ as a proxy for a full-weight matrix, we can effectively reduce the number of parameters. 

In Table~\ref{low rank}, we present the effectiveness of employing a low-rank approximation on the identical subset of 500 samples, as depicted in Table 3 in the main manuscript. This technique ensures a minimal number of parameters while achieving satisfactory generalisation. During the experimental setup, we endeavoured to utilise the lowest feasible value of $c$ (chosen via a hyperparameter grid of $\{4,8,12\}$) that would still yield reliable generalisation. Notably, our approach achieves good results while providing superior control over the parameters, in contrast to the shared auxiliary variable method.

\paragraph{Learning without auxiliary points} We also present a comparison of another variation of our method, i.e., learning without the use of auxiliary variables. Notably, the sharing of auxiliary variables yields the best results, followed by the utilisation of low-rank matrices in conjunction with auxiliary variables. Whilst the results obtained with low matrices are not significantly different from those obtained with shared auxiliary variables, they offer a further reduction in parameters. As seen in Table~\ref{low rank}, not using auxiliary variables causes a high number of parameters and the model overfits easily causing comparatively poor results (notably fewer parameters than FFJORD, depicted in Table~\ref{low data regimes} in the main manuscript, while achieving somewhat similar performance).

\begin{table}

  \caption{Results on a small subset of 500 examples. Log-Likelihood in nats, Higher the better}
  \label{low rank}
  \centering
  \begin{tabular}{llll}
    \toprule
    Dataset  &  shared auxiliary (\#params)  & low rank (\#params)  & no auxiliary (\#params)\\
    \midrule
    Miniboone & -27.75 (58K)  & -28.53 (19K) & -41.83 (345K) \\
    Hepmass  &  -27.90 (41K)  & -27.91 (14K) & -29.01 (126K)\\
    Gas & +0.22 (11K)    & -1.85 (6K) & -10.19 (32K)\\ 
    Power & -2.91 (8K)    & -3.13 (6K) & -9.26 (36K)\\ 
    BSDS300 & +121.22 (85K)    & +111.91 (17K) & +109.265 (505K) \\
    \bottomrule
  \end{tabular}
\end{table}

\section{Density estimation on real world medical dataset}
Following our experiments within the low-data regime, as discussed in Section~\ref{sec:low data}, we extend our analysis to a real-world medical dataset—the UK Biobank \citep{8f1d51a8c3974c15a75b78320bffac08, kirchler2022training}. This dataset encompasses phenotype and genotype information for a substantial cross-section of the UK population, encompassing a total of 30 biomarkers. Notably, only 3,240 individuals within the dataset possess complete information on all biomarkers.

In line with our experiments in Section~\ref{sec:low data}, we conducted a comparative analysis between our kernelised-RealNVP and FFJORD. The density estimation results presented in Table~\ref{table:medical data} exhibit better performance whilst training significantly faster, thereby reinforcing the findings in Table~\ref{low data regimes}. 
\begin{table}[!htbp]
  \caption{Results on the UKBiobank's biomarker data. Log Likelihood in nats, higher is better}
  \label{table:medical data}
  \centering
  \begin{tabular}{llll}
    \toprule
    Method     & Ours & params & train time \\ 
    \midrule
   \textbf{ Ours}   & \textbf{-29.11} & \textbf{41K} & \textbf{21 min}\\
    FFJORD  & -31.01  & 1.1M & 13.1 hrs \\
    \bottomrule
  \end{tabular}
\end{table}

\section{Discussion on our use of kernels}

\paragraph{Effect of kernel length scale} As with any kernel machine, the length scale serves as a highly sensitive hyperparameter in our method. During our investigation, we discovered that in identical settings, distinct kernel length scales produce varying outcomes. Certain scales have a tendency to overfit easily on the training set (e.g., -3.76 nats training likelihood on Miniboone), while some tend to underfit (e.g.,-40.23 nats training likelihood on Miniboone). This diverges from neural-net-based flows, where overfitting necessitates additional layers or nodes within each layer (bearing in mind that this results in an increase in parameters unlike our method). Such findings vividly demonstrate the high expressiveness of kernelisation in flow-based models.

\paragraph{Composite kernels} In our experiments, we mostly employed the Squared Exponential Kernel/RBF, Matern kernels. Nevertheless, it is feasible to employ a combination of kernels, also known as multiple kernel learning \citep{sonnenburg2006large}. We defer the comprehensive analysis of kernel composition and its application to future endeavours. 

\section{Runtime comparisons}
\label{sec:run time}

\begin{figure}[!htbp]
    \centering
    \includegraphics[width=0.8\linewidth]{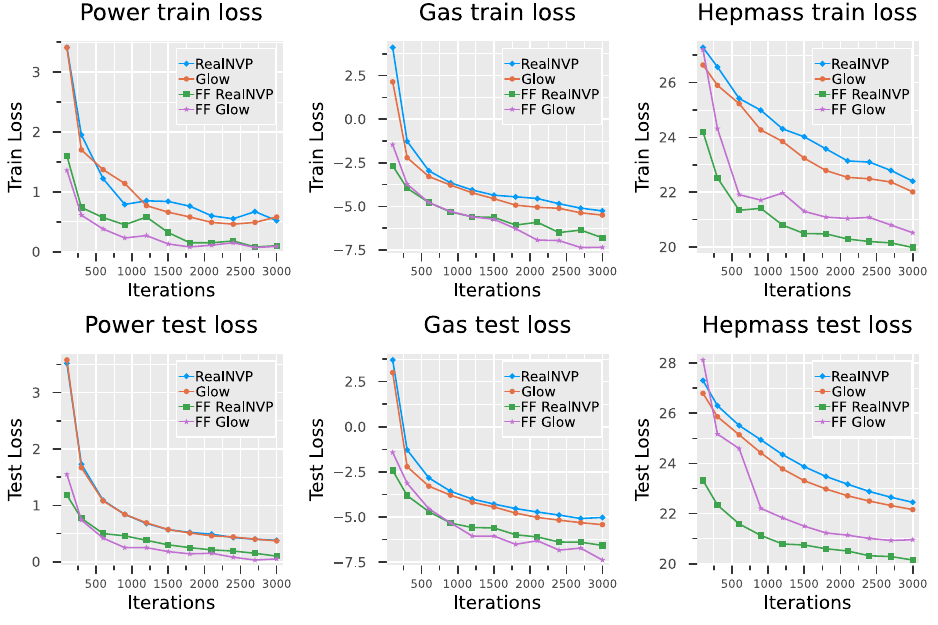} 
    \caption{Negative log-likelihood (loss in nats) on training and test sets over the first 3,000 training iterations.
    Methods prepended with FF are our kernelised versions. All models were further trained until convergence.
    }
    \label{fig:nats}
\end{figure}

\begin{table}[!htbp]

  \caption{Training time, hours and minutes}
  \label{run time}
  \centering
  \begin{tabular}{lllll}
    \toprule
    Dataset     & Glow  & Ours  & FFJORD & GF \\ 
    \midrule
    Miniboone   & 63m  &51m & 5h 21m & 4h 37 m\\ 
    Hepmass     & 8h 23m  &8h 49m & > 1 day & > 1 day\\
    Gas     & 8h 45m  &9h 07m & > 1 day & > 1 day\\
    Power     & 8h 13m  &8h 50m & > 1 day & > 1 day\\
    BSDS300    & 10h 57m  &10 49m & > 1 day & > 1 day \\
    \bottomrule
  \end{tabular}
\end{table}

We picked the best models in each category from Table~\ref{tab:nats} and compared them with our kernelised Glow model. For fair comparisons, we ran the models for the same number of epochs and used the same number of flow steps for the neural-net counterpart, Glow model. It is worth noting that our model has faster convergence up to 3 times than the Glow model. Table~\ref{run time} shows that despite using kernels, we have comparable run times with the neural-network counterpart. We perform significantly better than continuous time normalising flow, FFJORD, and iterative-gaussianisation-based kernel method, Gaussianisation flow, both take longer than a day on bigger datasets. 

\section{Improving MixerFlow's image generation}

\begin{figure}[!htbp]
  \centering
  \label{fig:cifar10}
  \subcaptionbox{\label{sub:cifar10nn} NN-based MixerFlow}{\includegraphics[width=0.45\linewidth]{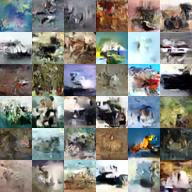}}
  \hfill
  \subcaptionbox{\label{sub:cifar10kernel}Our kernelised MixerFlow}{\includegraphics[width=0.45\linewidth]{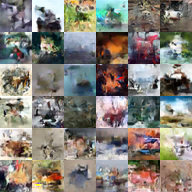}}
  \caption{Sampled images from MixerFlow}
\end{figure}

MixerFlow \citep{english2023mixerflow} is a flow architecture for image modelling. Unlike Glow-based architectures that relies on convolutional neural networks for image generation, MixerFlow offers a flexible way to integrate any flow method, making it suitable for image generation with our kernelised coupling layers.
In this section, we showcase the application of our kernelisation to the MixerFlow model on the CIFAR-10 dataset \citep{Krizhevsky2009LearningML}. We employed small models(30 layers) and trained on a single gpu. For fair comparison, we kept the same model architecture with the sole change being the replacement of the neural-network-based coupling layers with our kernelised coupling layer. Our results as shown in Table~\ref{table:mixerflow} demonstrate kernelisation can yield better result with faster convergence attributed to reduction of parameters. The generated samples can be seen in Figure~\ref{sub:cifar10nn} and ~\ref{sub:cifar10kernel}.

\begin{table}[!htbp]
  \caption{Results on the CIFAR-10 dataset. Log Likelihood in nats, higher is better}
  \label{table:mixerflow}
  \centering
  \begin{tabular}{llll}
    \toprule
    Method     & Ours & params & convergence step \\ 
    \midrule
   \textbf{ Ours(kernelised)}   & \textbf{-7644.39} & \textbf{4.85M} & \textbf{103K}\\
    NN-based MixerFlow  & -7665.65  & 11.43M & 195K\\
    \bottomrule
  \end{tabular}
\end{table}

\section{Improving VAE's ELBO}
\label{sec:vae improvement}
We also try our kernelised flows in hybrid settings, demonstrating that we can integrate them with neural-net-based architectures.  We apply our kernelised model, FF-GLow, to the variational autoencoder \citep{kingma2022autoencoding} in the form of flexible prior and approximate posterior distributions. We apply the methods to Kuzushiji-MNIST, which is a variant of MNIST containing Japanese script. We investigate the capacity of our kernelisation to improve over the baseline of standard-normal prior and diagonal-normal approximate posterior, and its neural network counterpart, Glow. We use 6 flow steps for each flow-based model and the latent hidden dimension equals 16. The quantitative results are shown in Table~\ref{vae improvement} and generated image samples in Figure~\ref{fig: vae samples} 

Both models (Glow and ours FF-Glow) improve significantly over the standard baseline. However, there is no considerable quantitative gain by using the kernelised version. We believe that this might be due to the Glow model being sufficient to model the latent space on the dataset and having a little margin for kernelisation to shine. However, our kernelisation still helps in making the model parameter efficient with only a small increase in parameter complexity compared to the baseline.

\begin{table}[!htbp]
  \caption{VAE test-set results (in nats) for the evidence lower bound (ELBO) on the Kuzushiji-Mnist dataset. Error bars correspond to two standard deviations.
}
  \label{vae improvement}
  \centering
  \begin{tabular}{llll}
    \toprule
         & ELBO & $\log p(x)$ & Params\\ 
    \midrule
    Baseline   & -195.61$\pm$1.25 & -182.33$\pm$ 1.30& 1.18M \\
    \midrule
    Glow   & -189.99$\pm$1.35 & -178.89$\pm$1.25& 2.05M\\
    FF-Glow (ours)   & -189.48$\pm$1.36 & -178.54$\pm$1.25 & 1.23M\\
    \bottomrule
  \end{tabular}
\end{table}

\section{Details of the datasets}
\label{sec:dataset detail}

In the following paragraphs, a brief description of the five datasets used in Table~\ref{tab:nats} (POWER, GAS, HEPMASS, MINIBOONE, BSDS300) and their preprocessing methods is provided.

\begin{table}[!htbp]
  \caption{Log-likelihood measured in nats. larger values are better. Methods prepended with FF are
our kernelised versions and the results with * are taken from existing literature. Error bars correspond to 2 standard deviations.}
  \label{tab:error bars}
  \centering
  \resizebox{\textwidth}{!}{
  \begin{tabular}{lllllll}
    \toprule
    \multicolumn{7}{c}{Datasets}                   \\
    \cmidrule(r){3-7}
    & Method     & Power     & Gas & Hepmass & Miniboone & BSDS300 \\
    \midrule
   Coupling & RealNVP & 0.17$\pm$ 0.01  & 8.33$\pm$ 0.14 & -18.7$\pm$ 0.02 & -13.55$\pm$ 0.49 & 153.28$\pm$ 1.78   \\
   & FF-RealNVP {\small(ours)}  & 0.24$\pm$ 0.01 & 9.55$\pm$0.03  & -18.20$\pm$0.04 & 1-1.19$\pm$ 0.35  &  154.30$\pm$ 2.11   \\
   &  Glow     & 0.17$\pm$ 0.01  & 8.15$\pm$ 0.40 & -18.92$\pm$0.08 & -11.35$\pm$0.07 & 155.07 $\pm$ 1.03  \\
   & FF-Glow {\small(ours)} & 0.35$\pm$ 0.01 & 10.75$\pm$ 0.02  & -17.11$\pm$0.02 & -10.76$\pm$0.44 & 154.71$\pm$ 0.28  \\
    \midrule
    
   Non-coupling & MADE* & -3.08 $\pm$ 0.03& 3.56$\pm$ 0.04 & -20.98$\pm$ 0.02 & -15.59$\pm$ 0.50 & 148.85$\pm$ 0.28    \\
   & MAF     & 0.24$\pm$ 0.01  & 10.08$\pm$ 0.02 & -17.70$\pm$ 0.02 & -11.75$\pm$ 0.44 & 155.69 $\pm$ 0.28   \\
   & FFJORD     & 0.46 $\pm$ 0.01 & 8.59$\pm$ 0.12 & -14.92 $\pm$ 0.08 & -10.43$\pm$ 0.04 & 157.40$\pm$ 0.19    \\
    \bottomrule
  \end{tabular}
  }
  
\end{table}

\paragraph{POWER:} The POWER dataset comprises measurements of electric power consumption in a household spanning 47 months. Although it is essentially a time series, each example is treated as an independent and identically distributed (i.i.d.) sample from the marginal distribution. The time component was converted into an integer representing the number of minutes in a day, followed by the addition of uniform random noise. The date information is omitted and the global reactive power parameter, as it had numerous zero values that could potentially introduce large spikes in the learned distribution. Uniform random noise was also added to each feature within the interval [0, $\epsilon_i$], where $\epsilon_i$ is chosen to ensure that there are likely no duplicate values for the i-th feature while maintaining the integrity of the data values.

\paragraph{GAS:} The GAS dataset records readings from an array of 16 chemical sensors exposed to gas mixtures over a 12-hour period. Like the POWER dataset, it is essentially a time series but was treated as if each example followed an i.i.d. distribution. The data selected represents a mixture of ethylene and carbon monoxide. After removing strongly correlated attributes, the dataset's dimensionality was reduced to 8.

\paragraph{HEPMASS:} The HEPMASS dataset characterizes particle collisions in high-energy physics. Half of the data correspond to particle-producing collisions (positive), while the remaining data originate from a background source (negative). In this analysis, we utilized the positive examples from the "1000" dataset, where the particle mass is set to 1000. To prevent density spikes and misleading results, five features with frequently recurring values were excluded.

\paragraph{MINIBOONE:} The MINIBOONE dataset is derived from the MiniBooNE experiment at Fermilab. Similar to HEPMASS, it comprises positive examples (electron neutrinos) and negative examples (muon neutrinos). In this case, only the positive examples were employed. Some evident outliers (11) with values consistently set to -1000 across all columns were identified and removed. Additionally, seven other features underwent preprocessing to enhance data quality.

\paragraph{BSDS300:}The dataset was created by selecting random 8x8 monochrome patches from the BSDS300 dataset, which contains natural images. Initially, uniform noise was introduced to dequantize the pixel values, after which they were rescaled to fall within the range [0, 1]. Furthermore, the average pixel value was subtracted from each patch, and the pixel located in the bottom-right corner was omitted.

\begin{figure}[!htbp]

    \centering
    \includegraphics[width=1.0\linewidth]{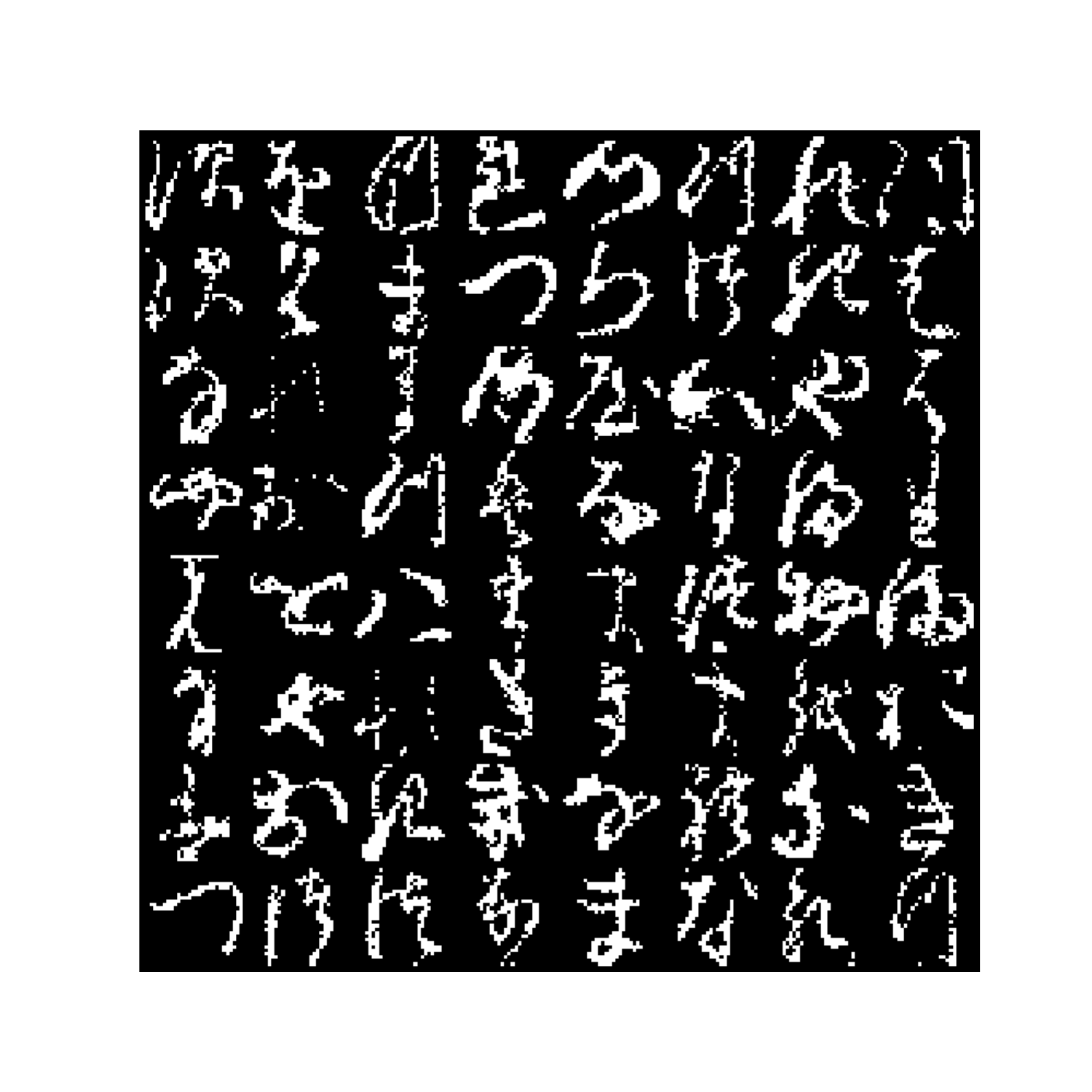} 
    \caption{Original samples after binarisation}
    \label{fig: vae orig samples}
\end{figure}

\begin{figure}

    \centering
    \includegraphics[width=1.0\linewidth]{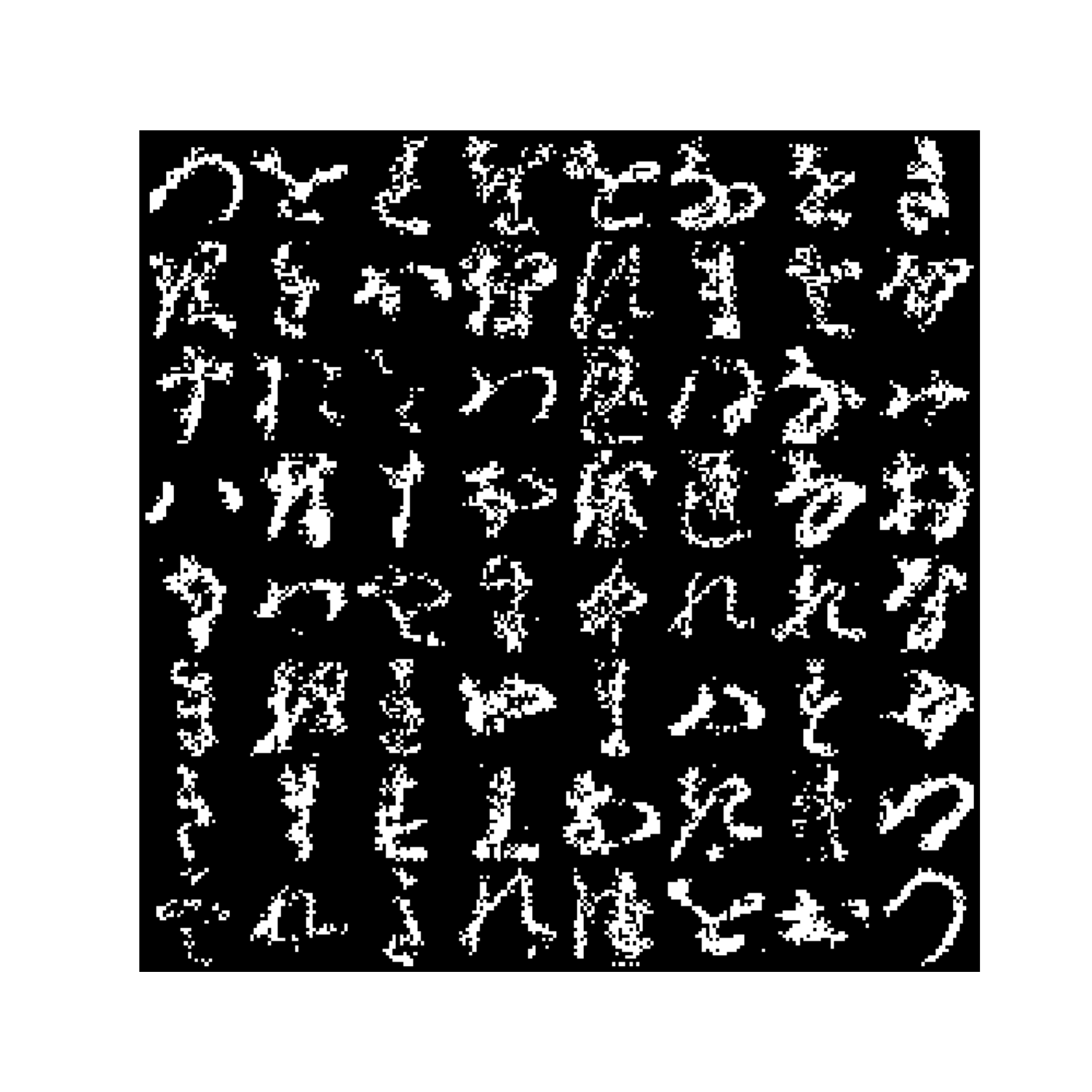} 
    \caption{VAE samples from neural-net-based Glow}
    \label{fig: vae glow samples}
\end{figure}

\begin{figure}

    \centering
    \includegraphics[width=1.0\linewidth]{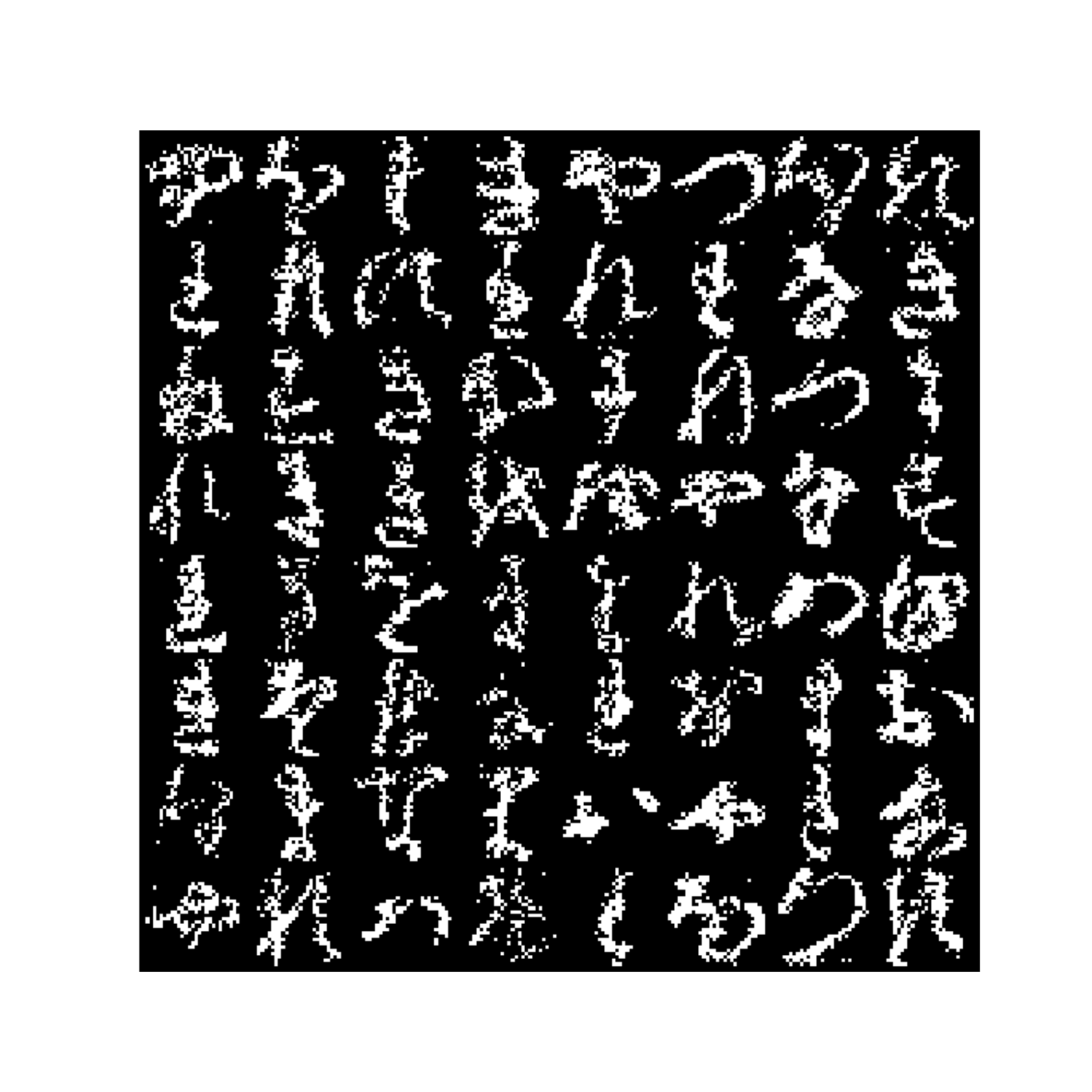} 
    \caption{VAE samples from our kernelised flow, FF-Glow}
    \label{fig: vae samples}

\end{figure}

\end{document}